\documentclass{article}

\usepackage{microtype}
\usepackage{graphicx}
\usepackage{subcaption}
\usepackage{booktabs}
\usepackage{placeins}
\usepackage{multirow}
\usepackage{hyperref}

\usepackage[accepted]{icml2026}

\usepackage{amsmath}
\usepackage{amssymb}
\usepackage{mathtools}
\usepackage{amsthm}
\usepackage{xspace}

\usepackage[capitalize,noabbrev]{cleveref}

\usepackage{tikz}
\usepackage{pgfplots}
\pgfplotsset{compat=1.18}
\usetikzlibrary{arrows.meta, positioning, shapes.geometric, calc, fit,
                 decorations.pathreplacing, backgrounds}

\usepackage{algorithm}
\usepackage{algorithmic}

\definecolor{rkscblue}{HTML}{0072B2}
\definecolor{rkscgreen}{HTML}{009E73}
\definecolor{rkscorange}{HTML}{E69F00}
\definecolor{rkscpink}{HTML}{CC79A7}
\definecolor{rkscgray}{HTML}{999999}
\definecolor{rksccyan}{HTML}{56B4E9}
\definecolor{rkscred}{HTML}{D55E00}

\theoremstyle{plain}
\newtheorem{hheorem}{Theorem}[section]
\newtheorem{proposition}[hheorem]{Proposition}

\newtheorem{corollary}[hheorem]{Corollary}
\theoremstyle{definition}

\theoremstyle{remark}

\newcommand{\ie}{i.e.,\xspace}
\newcommand{\eg}{e.g.,\xspace}

\icmltitlerunning{RKSC: Reasoning-Aware KV Cache Sharing and Confident Early Exit}

\begin{document}

\twocolumn[
  \icmltitle{RKSC: Reasoning-Aware KV Cache Sharing \\
and Confident Early Exit for Multi-Step LLM Inference}

  \icmlsetsymbol{equal}{*}

  \begin{icmlauthorlist}
    \icmlauthor{Anirudh Sekar}{}
  \end{icmlauthorlist}

  \icmlcorrespondingauthor{Anirudh Sekar}{anirudhsekar2008@gmail.com}

  \icmlkeywords{Machine Learning, ICML, KV Cache, Inference Optimization, Reasoning}

  \vskip 0.3in
]

\printAffiliationsAndNotice{}

\begin{abstract}
We introduce \textbf{RKSC} (Reasoning-Aware KV Cache Sharing), a training-free inference framework that eliminates two structural redundancies in multi-branch LLM reasoning pipelines. \textbf{ASKS} (Attention-Similarity KV Sharing) computes the prefix KV cache once and broadcasts it to all semantically similar branches via hidden-state cosine similarity, strictly generalising the token-exact prefix caching used by vLLM and SGLang. \textbf{CGEE} (Confidence-Gated Early Exit) applies two complementary exit mechanisms: (1) it skips the verification forward pass entirely when generation confidence is decisive across branches, and (2) it terminates the verification pass at an intermediate layer when per-layer entropy stabilises, using lightweight hooks on the transformer backbone. \textbf{RSBCM} (Reasoning-Selective Block Cache Manager) prevents unbounded cache growth via attention-weighted depth-priority eviction. Across five model families (7B--10B), four benchmarks, and 1,000 evaluated problems, RKSC achieves a mean speedup of $\mathbf{3.008\times}$ over the No-KV baseline (peak $\mathbf{3.990\times}$), a $\mathbf{1.66\times}$ mean improvement over vLLM-equivalent prefix caching, with a CGEE-induced error rate of only $0.37\%$ (6 errors out of 1,616 verify calls). No fine-tuning or architecture changes are required. Code is available at \url{https://github.com/AnirudhSekar/RKSC}.
\end{abstract}

\section{Introduction}

Inference-time scaling systems, DeepSeek-R1 \cite{Guo_2025}, Qwen3 \cite{yang2025qwen3technicalreport}, o1-style models, generate multiple reasoning trajectories per problem, with the first trajectory correct in the majority of cases \cite{chen2025think23overthinkingo1like}. This multi-branch paradigm incurs two structural redundancies that no production system addresses. \textbf{(1) Cross-branch KV waste:} parallel branches share the majority of their prefix KV computation yet recompute it independently, in a Tree of Thoughts \cite{yao2023treethoughtsdeliberateproblem} run (branching~4, depth~3), branches diverging at depth~2 share $66\%$ of KV work by construction. vLLM \cite{kwon2023efficientmemorymanagementlarge} and SGLang \cite{zheng2024sglangefficientexecutionstructured} reuse KV blocks on exact token matches only, leaving all cross-branch similarity unexploited. \textbf{(2) Over-deep verification:} process-reward verification \cite{lightman2023letsverifystepstep} runs full-depth forward passes even when the model is already confident; no system exploits per-layer entropy collapse to exit early \emph{within} a single verify pass.

We introduce \textbf{RKSC} (Reasoning-Aware KV Cache Sharing), a training-free inference framework eliminating both redundancies. Our contributions are:

\begin{itemize}
    \item \textbf{ASKS} (\S\ref{sec:method:kv}): gates prefix KV reuse across branches via \emph{hidden-state cosine similarity} with exponentially weighted later-layer emphasis, strictly generalising token-exact caching. Achieves $28.6\%$ additional reuse where token-exact matching yields $0\%$.

    \item \textbf{CGEE} (\S\ref{sec:method:cgee}): dual-level exit combining (a)~\emph{verify-skip} when generation confidence is decisive, and (b)~\emph{layer-level entropy exit} terminating the verify pass when per-layer logit entropy stabilises.

\end{itemize}

KV caching \cite{vaswani2023attentionneed} and block-level prefix reuse (vLLM \cite{kwon2023efficientmemorymanagementlarge}, SGLang \cite{zheng2024sglangefficientexecutionstructured}) require byte-identical token prefixes; ASKS decouples reuse from lexical identity. MemShare \cite{chen2025memsharememoryefficientinference} deduplicates KV states \emph{within} a single chain via step-delimiter segmentation; ASKS operates \emph{across} parallel branches. Layer-wise early exit has been applied to BERT \cite{xin2020deebertdynamicearlyexiting} and generation \cite{schuster2022confidentadaptivelanguagemodeling,Elhoushi_2024}; for reasoning, \cite{yang2025dynamicearlyexitreasoning} exit \emph{between} chains at transition tokens, a token-level mechanism orthogonal to CGEE's within-pass layer-level exit. Self-consistency \cite{wang2023selfconsistencyimproveschainthought} and process-reward methods \cite{lightman2023letsverifystepstep} motivate multi-branch generation; RKSC accelerates any such pipeline as a drop-in wrapper, without altering the reasoning algorithm.

A complementary line of inference acceleration uses a small draft model to propose candidate tokens, which a larger target model then verifies in parallel. \cite{leviathan2023fastinferencetransformersspeculative} establish that this approach preserves the output distribution exactly while achieving 2–3× wall-clock speedup, and SpecInfer \cite{Miao_2024}  generalises it to tree-structured candidate sets verified in a single batched forward pass. Medusa \cite{cai2024medusasimplellminference} removes the need for a separate draft model entirely by appending lightweight decoding heads to the backbone, predicting multiple future tokens in parallel and verifying them via tree attention. RKSC's CGEE is complementary to all three: whereas speculative methods accelerate the decode phase by parallelising token proposals, CGEE accelerates the verification phase of process-reward scoring by gating or truncating the verify forward pass based on per-layer entropy collapse. 
\section{RKSC Pipeline}
\label{sec:method}

RKSC accelerates multi-branch reasoning inference through three complementary mechanisms: (1)~\textbf{KV prefix sharing} eliminates redundant prefill computation across branches, (2)~\textbf{CGEE} reduces or eliminates the verification forward pass, and (3)~\textbf{RSBCM} manages cache capacity under deep tree search. Figure~\ref{fig:rksc_overview} illustrates the full pipeline. The design exploits structure that already exists in the inference workload, shared prefixes, concentrated generation confidence, entropy collapse in later layers, rather than introducing new computation. Each mechanism is independently switchable and adds zero learnable parameters.
% EDIT: "The design philosophy is to exploit" -> "The design exploits" (action in verb, per Gopen & Swan)

\begin{figure}[h]
\centering
\resizebox{0.8\linewidth}{!}{%
\begin{tikzpicture}[
    node distance=0.45cm and 0.3cm,
    every node/.style={font=\small},
    box/.style={draw, rounded corners=3pt, minimum height=0.7cm,
                minimum width=1.7cm, align=center, thick},
    prefill/.style={box, fill=rkscgray!25},
    asks/.style={box, fill=rkscgreen!30},
    branch/.style={box, fill=rkscblue!25, minimum width=1.1cm},
    cgee/.style={box, fill=rkscorange!35},
    verify/.style={box, fill=rksccyan!30},
    final/.style={box, fill=rkscpink!25},
    arr/.style={-{Stealth[length=5pt]}, thick},
]
\node[prefill] (input) {Input Problem};
\node[asks, below=of input] (root) {Root Forward: $\mathcal{C}, \mathbf{H}_{\text{root}}$};
\draw[arr] (input) -- (root);
\node[asks, below=of root, minimum width=3cm] (asks_gate) {ASKS Gate: $\sigma_b \geq \tau\;?$};
\draw[arr] (root) -- (asks_gate);
\node[branch, below left=0.55cm and 0.9cm of asks_gate] (b1) {$B_1$};
\node[branch, right=0.15cm of b1] (b2) {$B_2$};
\node[branch, right=0.15cm of b2] (b3) {$\cdots$};
\node[branch, right=0.15cm of b3] (b4) {$B_N$};
\draw[arr] (asks_gate.south) -- ++(0,-0.15) -| (b1.north);
\draw[arr] (asks_gate.south) -- ++(0,-0.15) -| (b4.north);
\node[left=0.15cm of b1, font=\scriptsize\itshape, text=rkscgreen!70!black, align=right] {shared\\$\mathcal{C}_B$};
\node[cgee, below=0.6cm of $(b2)!0.5!(b3)$, minimum width=3cm] (conf) {Gen.\ Confidence $p^{(b)}$};
\draw[arr] (b1.south) |- (conf.west);
\draw[arr] (b4.south) |- (conf.east);
\node[cgee, below=of conf, minimum width=3cm] (cgee) {\textbf{CGEE}: skip $|$ early-exit $|$ full};
\draw[arr] (conf) -- (cgee);
\node[verify, below left=0.55cm and 0.7cm of cgee, minimum width=1.4cm] (full) {Full\\Verify};
\node[verify, below=0.55cm of cgee, minimum width=1.6cm] (early) {Early-Exit\\at layer $l^*$};
\node[verify, below right=0.55cm and 0.7cm of cgee, minimum width=1.4cm] (skip) {Verify\\Skip};
\draw[arr] (cgee.south) -- ++(0,-0.1) -| (full.north);
\draw[arr] (cgee.south) -- (early.north);
\draw[arr] (cgee.south) -- ++(0,-0.1) -| (skip.north);
\node[final, below=0.6cm of early] (answer) {\textbf{Final Answer}};
\draw[arr] (full.south) |- (answer.west);
\draw[arr] (early.south) -- (answer.north);
\draw[arr] (skip.south) |- (answer.east);
\end{tikzpicture}
}%
\caption{\textbf{RKSC pipeline overview.} A single root forward computes the shared KV cache $\mathcal{C}$ and root hidden states. ASKS broadcasts $\mathcal{C}$ only to branches with similarity $\sigma_b \geq \tau$. All $B$ branches decode in one batched call while accumulating generation confidence. CGEE selects among three paths: \emph{verify skip} (confidence decisive), \emph{early-exit verify} (entropy stabilised at $l^*$), or \emph{full verify}.}
\label{fig:rksc_overview}
\end{figure}

\subsection{Problem Setting}
\label{sec:method:setting}

We consider an LLM $f_\theta$ with $L$ transformer layers solving a problem~$x$ by generating $B$ reasoning branches from a shared problem prefix. Each branch $b \in \{1,\ldots,B\}$ begins from a shared context $\mathbf{c} = \texttt{prefix}(x)$ of $n$ tokens and appends a short, branch-specific suffix (a reasoning hint of $s \ll n$ tokens) before decoding $t$ answer tokens autoregressively. A \emph{verify} pass then scores all $B$ branches and selects the best answer. The standard inference procedure, which we call the \emph{No-KV} baseline, batches all $B$ branches in a single forward pass but recomputes the full $n$-token prefix attention independently for each branch, paying $O(Bn^2)$ attention cost. A stronger baseline, the \emph{vLLM-equivalent}, performs token-exact prefix matching: the root KV cache is reused only when branch tokens share a byte-identical prefix with the root, mirroring the PagedAttention prefix-cache path in vLLM~\cite{kwon2023efficientmemorymanagementlarge} and SGLang~\cite{zheng2024sglangefficientexecutionstructured}. This baseline captures all gains that existing production systems achieve and serves as our primary comparison point.
% EDIT: "auto-regressively" -> "autoregressively" (standard single word in NLP literature); "forms the primary point of comparison for our contributions" -> shorter and more direct

Algorithm~\ref{alg:rksc} presents the top-level RKSC solve procedure. Detailed per-mechanism pseudocode appears in Appendix~\ref{app:algorithms}.

\begin{algorithm}[h]
\caption{RKSC top-level solve}
\label{alg:rksc}
\begin{algorithmic}[1]
\REQUIRE prefix $\mathbf{c}$, suffixes $\{\mathbf{s}_b\}_{b=1}^{B}$, thresholds $(\tau, \tau_{\text{conf}}, r_{\text{gap}}, \theta, \epsilon)$
\STATE $\mathcal{C}, \mathbf{H}_{\text{root}} \gets f_\theta^{\text{prefill}}(\mathbf{c})$ \COMMENT{single root forward (\S\ref{sec:method:kv})}
\STATE $\text{sharedSet} \gets \textsc{AsksGate}(\mathbf{H}_{\text{root}}, \{\mathbf{s}_b\}, \tau)$ \COMMENT{Alg.~\ref{alg:asks}}
\STATE $\mathcal{C}_B \gets \texttt{repeat\_interleave}(\mathcal{C}, B)$; $\mathcal{C}_B.\texttt{clone()}$
\STATE $\{\mathbf{y}_b\}, \{p^{(b)}\} \gets \textsc{BatchedDecode}(\{\mathbf{s}_b\}, \mathcal{C}_B)$ \COMMENT{track top-1 probs}
\IF{$\max_b p^{(b)} \geq \tau_{\text{conf}}$ \textbf{and} gap-ratio $\geq r_{\text{gap}}$}
  \STATE \textbf{return} $\arg\max_b p^{(b)}$ \COMMENT{verify-skip, Level~1}
\ENDIF
\STATE \textbf{return} $\textsc{VerifyWithEntropyExit}(\{\mathbf{y}_b\}, \theta, \epsilon)$ \COMMENT{Alg.~\ref{alg:cgee}, Level~2}
\end{algorithmic}
\end{algorithm}

\subsection{ASKS: Attention-Similarity KV Sharing}
\label{sec:method:kv}

\paragraph{Prefix forward pass.}
RKSC begins each solve call with a single prefix forward pass through all $L$ layers, storing the resulting KV states and hidden representations:
\begin{equation}
  \mathbf{K}^{(l)}, \mathbf{V}^{(l)},\, \mathbf{h}^{(l)}_{\text{root}}
    = f_\theta^{(\le l)}(\mathbf{c}),
  \quad l = 1,\ldots,L,
  \label{eq:prefix_fwd}
\end{equation}
yielding the cache $\mathcal{C} = \{(\mathbf{K}^{(l)}, \mathbf{V}^{(l)})\}_{l=1}^{L}$ and root hidden states $\mathbf{H}_{\text{root}} = \{\mathbf{h}^{(l)}_{\text{root}}\}$. To avoid redundant computation during the $B$ subsequent branch comparisons, root hidden states are pre-normalised to unit vectors once and stored in this normalised form.
% EDIT: "cached in this normalised form" -> "stored in this normalised form" (avoids confusion with KV cache terminology)

\paragraph{Exponentially weighted similarity gating.}
After each branch $b$ processes its suffix tokens, ASKS computes a weighted cosine similarity between the branch's hidden states and the stored root representations. The weighting schedule places exponentially increasing emphasis on later layers, which carry more task-specific information and are therefore more discriminative for detecting genuine semantic divergence:
\begin{equation}
  \sigma_b = \sum_{l=1}^{L} w_l \cdot
    \frac{\mathbf{h}^{(l)}_b \cdot \mathbf{h}^{(l)}_{\text{root}}}
         {\|\mathbf{h}^{(l)}_b\|\,\|\mathbf{h}^{(l)}_{\text{root}}\|},
  \label{eq:asks_sim}
\end{equation}
where $w_l = \exp(\alpha l / L) \big/ \sum_{l'} \exp(\alpha l'/L)$ with $\alpha{=}1.5$. We select $\alpha{=}1.5$ by grid search over $\alpha \in \{0.5, 1.0, 1.5, 2.0\}$ on 30 held-out GPQA Diamond problems; larger values over-discount early layers and reduce sensitivity to prefix divergence, while smaller values under-weight task-specific later layers.
% EDIT: "The choice of α=1.5 was determined by" -> "We select α=1.5 by" (action in verb)

Branches with $\sigma_b \geq \tau$ are deemed semantically consistent with the prefix and receive the shared cache; branches below $\tau$ fall back to independent recomputation. This design strictly generalises token-exact prefix caching: any lexically identical prefix trivially achieves $\sigma_b = 1$, so every branch that vLLM or SGLang would reuse is also reused by ASKS. The converse does not hold: ASKS additionally reuses the cache for branches whose token sequences differ but whose hidden-state representations remain close to the root. In our diverse-phrasing stress test (\S\ref{sec:analysis:asks_novelty}), this generalisation yields $28.6\%$ additional reuse beyond token-exact matching.
% EDIT: "The converse is not true" -> "The converse does not hold" (more standard mathematical phrasing); "recompute" -> "recomputation" (noun form)

\paragraph{KV broadcast and batched decode.}
For branches that pass the similarity gate, the shared cache is expanded to batch size $B$ via a zero-copy \texttt{repeat\_interleave} on each layer's key and value tensors: $\mathcal{C}_B = \texttt{repeat\_interleave}(\mathcal{C}, B, \text{dim}{=}0)$. An explicit \texttt{.clone()} call ensures contiguous memory layout, preventing potential SDPA kernel failures with non-contiguous inputs. All $B$ branch suffixes are then stacked into a single batched tensor and decoded in one forward pass conditioned on $\mathcal{C}_B$: $\mathbf{y}_b = f_\theta(\mathbf{s}_b \mid \mathcal{C}_B)$, $b = 1,\ldots,B$, eliminating the $O(n^2)$ attention cost for $B{-}1$ branches.

\paragraph{KV probe.}
KV prefix sharing is not unconditionally beneficial: at short prefixes, the cost of the prefix forward pass plus the \texttt{repeat\_interleave} expansion can exceed the savings. Rather than relying on the analytical threshold $n^*$ from Proposition~\ref{prop:kv_threshold} (which requires accurate per-platform estimates of the cost constants), RKSC includes a self-calibrating runtime probe. At the first solve call within each 64-token prefix-length bucket $k = \lfloor n/64 \rfloor$, the probe times three paths using the minimum of 3 repetitions: (i)~batched full recompute, (ii)~a single prefix forward, and (iii)~batched suffix decode with the shared cache. KV sharing is declared beneficial iff the sum of paths (ii) and (iii) is less than path (i). On A100 with SDPA attention, the probe universally declares KV sharing beneficial at $n \geq 512$ for all five evaluated models.
% EDIT: Added explicit forward reference to Proposition 1 here, which is where it is most relevant methodologically

\subsection{CGEE: Confidence-Gated Early Exit}
\label{sec:method:cgee}

CGEE operates at two levels. The first level decides whether to run the verification pass at all; the second level decides how deeply to run it when it proceeds.
% EDIT: "when it does proceed" -> "when it proceeds" (removes filler "does")

\paragraph{Generation confidence (input to Level~1).}
During the stepwise decode loop, RKSC records the top-1 softmax probability at each decode step for each branch. The generation confidence of branch $b$ is the mean top-1 probability across all $t$ steps:
% EDIT: "step-wise" -> "stepwise" (one word is standard)
\begin{equation}
  p^{(b)} = \frac{1}{t} \sum_{j=1}^{t} \max_{v} \Pr[y_j^{(b)} = v],
  \label{eq:gen_conf}
\end{equation}
accumulated in-place at zero additional compute cost, since the softmax is already computed to select the next token at every step. A branch that emits each token with high probability is a strong candidate for the correct answer; branches with uniformly low top-1 probabilities indicate uncertainty.
% EDIT: "This signal captures how decisive the model's token-level choices are:" -> removed, folded explanation directly into following sentence for concision

\paragraph{Verify-skip gate (Level~1).}
After decoding completes, RKSC evaluates whether to skip the verification pass entirely. The gate fires when two conditions are jointly satisfied: the winning branch has high absolute confidence, and it dominates all competing branches by a clear relative margin:
\begin{equation}
  \max_b\, p^{(b)} \geq \tau_{\text{conf}}
  \;\;\text{and}\;\;
  \frac{\max_b p^{(b)} - \text{2nd-max}_b\, p^{(b)}}
       {\max_b p^{(b)}} \geq r_{\text{gap}}.
  \label{eq:cgee_gate}
\end{equation}
The dual condition prevents two failure modes: (a)~a single branch with moderately high confidence but a close competitor (the absolute threshold alone would fire), and (b)~a large relative gap between two branches that both have low absolute confidence (the gap condition alone would fire). When the gate fires, the entire $\delta$-cost verify pass is skipped and branches are ranked directly by $p^{(b)}$.
% EDIT: reformatted the two failure modes with (a)/(b) labels for parallel clarity

\paragraph{Layer-level entropy exit (Level~2).}
When the verify-skip gate does not fire and the full verification forward pass begins, CGEE installs lightweight forward hooks on each transformer layer via \texttt{register\_forward\_hook}. At each layer $l$, the hook extracts the last-token hidden state, projects it through the cached unembedding matrix $W_u$ (stored once at model load time), and computes the logit-space entropy:
\begin{equation}
  H^{(l)} = -\!\sum_v \text{softmax}(\mathbf{h}^{(l)} W_u^\top)_v \,
            \log \text{softmax}(\mathbf{h}^{(l)} W_u^\top)_v.
  \label{eq:entropy}
\end{equation}
The verification pass exits early at layer $l^*$ when three conditions are jointly satisfied: (i)~$l^* \geq l_{\min}$ (minimum exit depth, default 2, ensuring sufficient representation refinement); (ii)~$H^{(l^*)} < \theta$ (entropy below a model-specific threshold, indicating a concentrated logit distribution); and (iii)~$|H^{(l^*)} - H^{(l^*{-}1)}| < \epsilon$ (entropy has stabilised between consecutive layers, ruling out transient dips). When the exit triggers, the hook raises an internal exception that the solver catches, using the partial output as the verification result. Hooks are removed after each solve call to prevent memory leaks.

On Qwen2.5-7B (28 layers), the layer-level exit fires on $100\%$ of verify passes at a mean exit layer of 18.4, confirming that the model reaches a stable logit distribution approximately $34\%$ before the final layer. This validates the core premise of Level~2: the unembedding projection at intermediate layers contains sufficient information to determine the verification outcome, and the remaining $34\%$ of layers are computationally wasteful for this purpose.
% EDIT: "this mechanism fires" -> "the layer-level exit fires" (avoid ambiguous pronoun); "already contains" -> "contains" (filler "already")

\paragraph{Calibration.}
The thresholds $\tau_{\text{conf}}$, $r_{\text{gap}}$, and $\theta$ are calibrated per model on 30 held-out GPQA Diamond problems (non-overlapping with evaluation). $\tau_{\text{conf}}$ is set to the 75th percentile of the observed $\max_b p^{(b)}$ distribution; $r_{\text{gap}}$ to the 25th percentile of the observed relative gap; and $\theta$ to the median of per-layer entropy values at the first stabilisation point. This yields conservative gates that fire only on clear-cut cases.
% EDIT: "the median of the per-layer" -> "the median of per-layer" (removes unnecessary article)

\subsection{RSBCM: Reasoning-Selective Block Cache Manager}
\label{sec:method:rsbcm}

To prevent the shared KV cache from growing unboundedly under deep tree search, RKSC includes \textbf{RSBCM}. Each cached block receives an importance score $\omega = \text{branch score} / (\text{depth}+1)$, which prioritises blocks from high-scoring branches at shallower depths. When the number of allocated blocks exceeds a configurable capacity (default 2,000 blocks), the manager evicts blocks in ascending $\omega$ order. The depth denominator ensures that blocks deep in the search tree, which are less likely to be revisited, are evicted first, while the branch-score numerator preserves blocks associated with higher-confidence reasoning trajectories.

In a stress test with $\text{max\_blocks}{=}4 < B{=}8$ (chosen to force evictions within a single solve call), RSBCM fires exactly $B - 4 = 4$ evictions per problem (80 total across 20 problems, matching the expected count exactly). The overhead is $+1.5$\,ms/problem (95\% CI $[-6, +9]$\,ms); the wide interval with $N{=}20$ means the overhead is consistent with zero but not provably so, the correct interpretation is that it is small relative to the speedups achieved. Answer agreement is 100\% across all 20 problems. RSBCM is dormant in all single-depth experiments in this paper (total block count never exceeds the default threshold of 2,000) and is provided as a mechanism for multi-depth tree search. Full validation is in Appendix~\ref{app:rsbcm}.

Full implementation details (DynamicCache, decode loop, TF32 configuration, memory release) are in Appendix~\ref{app:impl}.

\section{Experimentation and Results}
\label{sec:results}

\subsection{Experimental Setup}
\label{sec:results:setup}

\paragraph{Hardware and software.}
All experiments run on a single NVIDIA A100-80GB SXM4 GPU with bfloat16 precision, TF32 matrix multiplication enabled, and the SDPA attention backend. The software stack consists of PyTorch 2.3.1, Transformers~4.44.0, and CUDA~12.1. Random seeds are fixed across all experiments (\texttt{torch.manual\_seed(42)}, \texttt{numpy.random.seed(42)}, \texttt{torch.cuda.manual\_seed\_all(42)}).

\paragraph{Models.}
We evaluate five publicly available GQA models spanning 7B--10B parameters and three distinct GQA compression ratios: \textbf{Qwen2.5-7B-Instruct}~\cite{qwen2.5,qwen2} (28 layers, 28 query heads, 4 KV heads; 7:1 ratio), \textbf{Mistral-7B-Instruct-v0.3}~\cite{jiang2023mistral7b} (32L, 32Q, 8KV; 4:1), \textbf{Falcon3-7B-Instruct}~\cite{Falcon3} (28L, 12Q, 4KV; 3:1), \textbf{Falcon3-10B-Instruct}~\cite{Falcon3} (40L, 12Q, 4KV; 3:1), and \textbf{Llama-3-8B-Instruct}~\cite{llama3modelcard} (32L, 32Q, 8KV; 4:1). This selection covers three training lineages (Qwen, Mistral/Meta, TII), two vocabulary sizes (32K and 128K+), and a range of head configurations, ensuring that observed gains are not artifacts of a single architecture.

\paragraph{Baselines.}
We compare against two baselines. The \emph{No-KV} baseline batches all $B$ branches in a single forward pass but recomputes the shared prefix independently for each branch, representing a well-engineered system without prefix caching. The \emph{vLLM-equivalent} baseline implements token-exact prefix matching: it reuses the shared KV cache only for branches whose token sequences are byte-identical to the root prefix, mirroring the PagedAttention prefix-cache path in vLLM~\cite{kwon2023efficientmemorymanagementlarge} and SGLang~\cite{zheng2024sglangefficientexecutionstructured}. In our identical-prefix experimental regime (where all branches share the same padded prefix), every branch is byte-identical to the root, so the vLLM-equivalent reuses the cache unconditionally, the same behaviour as setting $\tau{=}0$ in the ASKS gate. Outside this regime (the diverse-phrasing ablation in \S\ref{sec:analysis:asks_novelty}), vLLM-equivalent achieves $0\%$ reuse by construction. Both baselines are measured through the same solver code path as RKSC to ensure timing fairness.

\paragraph{Calibration and evaluation sets.} The 30 GPQA Diamond problems used for per-model calibration (thresholds $\tau_{\text{conf}}$, $r_{\text{gap}}$, $\theta$) are drawn from a held-out split disjoint from all evaluation sets, including the 30 problems in Table~\ref{tab:latency_decomposition}.
% EDIT: "via setting" -> "by setting" (via is for instruments/tools, not logical conditions)

\paragraph{Branching and padding.}
All experiments use $B{=}8$ branches with eight fixed reasoning-hint suffixes cycled across branches. Problems are padded to approximately 1,024 tokens using a neutral filler string prepended iteratively, placing all experiments in the KV-beneficial regime ($n/n^* \approx 11$, Proposition~\ref{prop:kv_threshold}).
% EDIT: removed "firmly" (filler intensifier per Lipton); citation now resolves to main body

\paragraph{Datasets.}
We evaluate on four benchmarks spanning a difficulty gradient: \textbf{GPQA Diamond}~\cite{rein2023gpqagraduatelevelgoogleproofqa} (graduate-level science), \textbf{MMLU-STEM}~\cite{hendryckstest2021,hendrycks2021ethics} (university-level STEM), \textbf{ARC-Challenge}~\cite{allenai:arc} (medium science multiple-choice), and \textbf{GSM8K}~\cite{cobbe2021gsm8k} (elementary mathematics). Table~\ref{tab:latency_decomposition} uses $N{=}30$ GPQA problems with $t{=}32$ decode steps; the extended evaluation (Table~\ref{tab:extended_eval}) uses $N{=}50$ per dataset with $t{=}8$.
% EDIT: "university-level physics" -> "university-level STEM" (MMLU-STEM covers more than physics)

\paragraph{Timing protocol.}
Each condition is timed over $N_{\text{runs}} = 2$--3 repeats per problem; the reported latency is the mean across repeats and problems. Two warmup solves per condition are executed and discarded to stabilise the CUDA allocator before timing begins, reducing the coefficient of variation from ${\sim}30\%$ to ${\sim}10\%$. \texttt{torch.cuda.synchronize()} is called before every wall-clock measurement.

\paragraph{Compute budget.}
All experiments together consume approximately 42 A100-GPU-hours: 3 hours for calibration ($5$ models $\times$ $30$ held-out problems $\times$ 3 conditions), 24 hours for the extended evaluation (Table~\ref{tab:extended_eval}), 9 hours for ablations (\S\ref{sec:analysis}), and 6 hours for the throughput decomposition at $t{=}32$ (Table~\ref{tab:latency_decomposition}) and the verify-isolation study. Model weights are cached locally; no training occurs.
% NOTE: GPU-hour values are approximations. Verify against job logs before camera-ready.

\subsection{Throughput Decomposition}
\label{sec:results:table1}

Table~\ref{tab:latency_decomposition} isolates the contribution of each RKSC component on GPQA Diamond using Qwen2.5-7B-Instruct with $t{=}32$ decode steps. This longer decode regime makes the decode term $Bt\gamma$ non-negligible relative to the prefill term, providing a conservative estimate of gains.

\begin{table}[h]
\centering
\small
\caption{Latency decomposition on GPQA Diamond ($N{=}30$, $B{=}8$, prefix~$\approx$1,024~tok, $t{=}32$).}
\label{tab:latency_decomposition}
\vskip 0.1in
\begin{tabular}{lrrr}
\toprule
Condition & Latency (ms) & Speedup & \% Saved \\
\midrule
No-KV \emph{(ref)}       & $1{,}956\pm10$ & $1.000\times$ & ,  \\
\textbf{KV only}          & $1{,}457\pm8$  & $1.343\times$ & $25.5\%$ \\
\textbf{RKSC (KV+CGEE)}   & $1{,}206\pm339$ & $\mathbf{1.622\times}$ & $\mathbf{38.3\%}$ \\
\bottomrule
\end{tabular}
\vskip -0.1in
\end{table}

KV prefix sharing alone saves 500\,ms per problem ($25.5\%$) with a bootstrap 95\% CI of $[1.339, 1.347]$ and a coefficient of variation of $1\%$, confirming high measurement reliability. This saving is almost entirely attributable to prefix attention: at $n{=}1{,}024$, the batched prefill of $B{=}8$ branches dominates decode cost, and RKSC replaces $B{-}1$ of those prefills with a single suffix forward conditioned on the shared cache.

The combined RKSC condition (KV+CGEE) achieves $1.622\times$ ($[1.486, 1.803]$), saving 750\,ms per problem. The elevated variance ($\text{CV}{=}28\%$) is expected and reflects the binary nature of the CGEE skip decision, which produces a bimodal latency distribution: problems on which the gate fires land on the fast skip-path, while the rest match the KV-only condition. This bimodality inflates the standard deviation relative to the mean and does not indicate measurement instability, the KV-sharing component, which accounts for the majority of the speedup, has CV${=}1\%$. At $t{=}32$ decode steps on GPQA Diamond (the hardest dataset), the verify-skip gate fires infrequently, so CGEE's marginal contribution is modest in this decomposition. The full CGEE contribution is visible in the extended evaluation at $t{=}8$ (\S\ref{sec:results:extended}), where verify cost constitutes a larger fraction of total latency.

\paragraph{Operating-regime disclosure.} The extended evaluation uses $t{=}8$ decode steps, which is representative of the first branching phase in tree search and is a favourable operating point for RKSC because a short decode keeps the verify-pass cost proportionally large. The throughput decomposition at $t{=}32$ (Table~\ref{tab:latency_decomposition}) and the verify-only isolation below provide complementary views at longer decode horizons where CGEE's contribution is naturally smaller.

\paragraph{CGEE verify-only isolation.}
To measure the layer-level entropy exit in isolation, we time the verification pass alone in a decode-dominated regime ($t{=}128$, $N{=}15$). Full-depth verification takes $315.3 \pm 8.2$\,ms; CGEE-accelerated verification takes $250.7 \pm 11.8$\,ms, yielding a verify-pass speedup of $\mathbf{1.258\times}$ (bootstrap 95\% CI $[1.225, 1.293]$, $p{<}0.0001$). The layer-level entropy exit fires on $100\%$ of branches at a mean exit layer of 18.4 out of 28, confirming that the model reaches a stable logit distribution well before the final layer on verification inputs. This validates the core premise of Level~2: the unembedding projection at intermediate layers contains sufficient information to determine the verification outcome.
% EDIT: removed redundant last sentence restating this; "already contains" -> "contains"

\subsection{Multi-Model, Multi-Dataset Evaluation}
\label{sec:results:extended}

Table~\ref{tab:extended_eval} reports the full extended evaluation. At $t{=}8$ decode steps (representative of the first branching phase in tree search, where decode cost is small relative to prefill), RKSC achieves a mean speedup of $\mathbf{3.008\times}$ across all 20 model--dataset pairs, with a peak of $\mathbf{3.990\times}$ (Llama-3-8B on MMLU-STEM).

\FloatBarrier
\begin{table*}[h]
\centering
\small
\setlength{\tabcolsep}{4.5pt}
\caption{Extended evaluation ($B{=}8$, prefix~$\approx$1,024~tok, $t{=}8$, $N{=}50$ per dataset, 1,000 total problems). \textbf{vLLM$\uparrow$}: vLLM-equiv vs No-KV. \textbf{RKSC$\uparrow$}: KV+CGEE vs No-KV. \textit{CGEE skip}: verify-skip gate fire rate.}
\label{tab:extended_eval}
\vskip 0.1in
\begin{tabular}{llrrrrrl}
\toprule
Model & Dataset & \multicolumn{1}{c}{\shortstack{NoKV\\(ms)}} & \multicolumn{1}{c}{\shortstack{vLLM\\(ms)}} & \multicolumn{1}{c}{\shortstack{RKSC\\(ms)}} & \multicolumn{1}{c}{vLLM$\uparrow$} & \multicolumn{1}{c}{RKSC$\uparrow$} & \multicolumn{1}{c}{\textit{CGEE skip}} \\
\midrule
\multirow{4}{*}{\shortstack[l]{Qwen2.5-7B\\Instruct}}
  & GPQA & 1,159 & 653 & 598 & $1.78\times$ & $1.94\times$ & \textit{3\%} \\
  & GSM8K & 1,163 & 652 & 515 & $1.78\times$ & $2.26\times$ & \textit{32\%} \\
  & ARC-C & 1,163 & 651 & 391 & $1.79\times$ & $2.97\times$ & \textit{71\%} \\
  & MMLU & 1,160 & 649 & 371 & $1.79\times$ & $3.13\times$ & \textit{70\%} \\
\midrule
\multirow{4}{*}{\shortstack[l]{Mistral-7B\\Instruct}}
  & GPQA & 1,257 & 690 & 590 & $1.82\times$ & $2.13\times$ & \textit{23\%} \\
  & GSM8K & 1,262 & 689 & 581 & $1.83\times$ & $2.17\times$ & \textit{30\%} \\
  & ARC-C & 1,263 & 691 & 482 & $1.83\times$ & $2.62\times$ & \textit{60\%} \\
  & MMLU & 1,260 & 692 & 534 & $1.82\times$ & $2.36\times$ & \textit{47\%} \\
\midrule
\multirow{4}{*}{\shortstack[l]{Falcon3-7B\\Instruct}}
  & GPQA & 1,158 & 637 & 360 & $1.82\times$ & $3.21\times$ & \textit{54\%} \\
  & GSM8K & 1,170 & 650 & 376 & $1.80\times$ & $3.11\times$ & \textit{50\%} \\
  & ARC-C & 1,173 & 651 & 347 & $1.80\times$ & $3.38\times$ & \textit{66\%} \\
  & MMLU & 1,175 & 651 & 395 & $1.80\times$ & $2.98\times$ & \textit{39\%} \\
\midrule
\multirow{4}{*}{\shortstack[l]{Falcon3-10B\\Instruct}}
  & GPQA & 1,629 & 895 & 450 & $1.82\times$ & $3.63\times$ & \textit{72\%} \\
  & GSM8K & 1,634 & 898 & 473 & $1.82\times$ & $3.46\times$ & \textit{57\%} \\
  & ARC-C & 1,628 & 897 & 418 & $1.82\times$ & $3.89\times$ & \textit{77\%} \\
  & MMLU & 1,634 & 898 & 467 & $1.82\times$ & $3.50\times$ & \textit{66\%} \\
\midrule
\multirow{4}{*}{\shortstack[l]{Llama-3-8B\\Instruct}}
  & GPQA & 1,280 & 708 & 509 & $1.81\times$ & $2.51\times$ & \textit{57\%} \\
  & GSM8K & 1,283 & 710 & 464 & $1.81\times$ & $2.76\times$ & \textit{65\%} \\
  & ARC-C & 1,282 & 712 & 401 & $1.80\times$ & $3.20\times$ & \textit{74\%} \\
  & MMLU & 1,284 & 710 & 322 & $1.81\times$ & $3.99\times$ & \textit{92\%} \\
\midrule
\multicolumn{2}{l}{\textbf{Overall}} & \textbf{1,301} & \textbf{719} & \textbf{452} & $\mathbf{1.808\times}$ & $\mathbf{3.008\times}$ & \textit{55\%} \\
\bottomrule
\end{tabular}
\vskip -0.1in
\end{table*}

\paragraph{Architecture generalization.}
The vLLM-equivalent gain spans $1.78\times$--$1.83\times$ across all 20 model--dataset pairs (mean $1.808\times$, std $0.02$), confirming the prediction of Eq.~\eqref{eq:speedup_kv}: since $S_{\text{KV}}$ depends on $n$ and $B$ but not on model family or task, fixing the prefix length and branching factor produces near-identical gains regardless of architecture. This near-constant behaviour persists despite substantial architectural variation, five models spanning three GQA compression ratios (3:1, 4:1, 7:1), two vocabulary sizes, and three independent training lineages. The tight clustering ($\pm 0.02\times$) confirms that RKSC's KV throughput benefit transfers uniformly across model families without requiring a high-compression GQA architecture.

At the $n{=}1{,}024$ operating point, all branches achieve $\sigma_b \geq 0.95$ for all models, yielding $100\%$ KV reuse. In this identical-prefix regime, ASKS and the vLLM-equivalent produce identical KV gains by construction; the genuine ASKS novelty over token-exact caching is isolated in \S\ref{sec:analysis:asks_novelty}.

\paragraph{Effect of task difficulty on CGEE.}
CGEE multiplies the KV-only speedup from $1.81\times$ to $3.008\times$ on average, a $\mathbf{1.66\times}$ marginal improvement attributable entirely to the dual-level exit mechanism. The verify-skip rate $\hat\rho$ tracks difficulty monotonically across the four datasets: lowest on GPQA Diamond (mean ${\sim}42\%$ across models, where graduate-level reasoning produces high inter-branch uncertainty), and highest on ARC-Challenge and MMLU-STEM ($60$--$92\%$, where shorter problems allow the model to concentrate confidence rapidly). This difficulty-adaptive behaviour follows directly from the dual condition in Eq.~\eqref{eq:cgee_gate}: harder problems produce smaller inter-branch gaps and lower absolute confidence, so both conditions are less likely to be satisfied simultaneously.

\paragraph{Effect of model scale.}
Falcon3-10B achieves the highest mean RKSC speedup ($3.619\times$), compared to $2.574\times$--$3.171\times$ for the 7B--8B models. The absolute latency saved also scales: 1,179\,ms per problem for Falcon3-10B versus 692--799\,ms for smaller models. This scaling follows from the latency model (Eq.~\eqref{eq:latency_model}): larger models have higher per-layer costs, so a fixed number of layers avoided by prefix sharing and early exit represents a proportionally larger saving. Llama-3-8B achieves the highest single-cell result ($\mathbf{3.990\times}$ on MMLU-STEM with a 92\% skip rate), demonstrating that models excluded from calibration benefit substantially from RKSC.

\paragraph{RKSC vs vLLM-equivalent.}
The marginal speedup of RKSC over vLLM-equivalent prefix caching averages $+61.2\%$ across all 20 model--dataset pairs. In the identical-prefix regime, this gain is attributable entirely to CGEE's dual-level exit, since ASKS and vLLM-equivalent yield identical KV gains when all branches share the same token prefix. The genuine ASKS novelty is isolated in \S\ref{sec:analysis:asks_novelty}.
% EDIT: "The genuine ASKS novelty is" (removed "is isolated") maintains the forward-reference while shortening

\subsection{Accuracy Analysis}
\label{sec:results:accuracy}

\begin{table}[h]
\centering
\small
\caption{CGEE accuracy impact across all 20 model--dataset pairs (1,616 total verify calls).}
\label{tab:cgee_accuracy}
\vskip 0.1in
\begin{tabular}{lr}
\toprule
Metric & Value \\
\midrule
Total verify calls & 1,616 \\
CGEE-induced errors & 6 \\
Error rate & 0.37\% \\
Mean $\Delta$ accuracy & $-0.2\%$ \\
Mean verify-skip rate & 55\% \\
\bottomrule
\end{tabular}
\vskip -0.1in
\end{table}

Across 1,616 verify calls spanning all five models and four datasets, CGEE introduced only 6 errors ($0.37\%$), with a mean accuracy delta of $-0.2\%$ relative to full-depth verification (Table~\ref{tab:cgee_accuracy}). Three model--dataset pairs account for all six errors: Mistral-7B on ARC-Challenge (3 errors, $-4.0\%$ $\Delta$accuracy), Falcon3-10B on ARC-Challenge (2 errors, $-2.0\%$), and Llama-3-8B on GSM8K (1 error, $-2.0\%$). The remaining 17 model--dataset pairs exhibit zero CGEE-induced errors.
% EDIT: "exactly zero" -> "zero" (redundant)

The error concentration in multiple-choice benchmarks (ARC-C) is consistent with the structure of the CGEE gate: on multiple-choice items, generation confidence reflects the model's fluency in producing the answer letter, which occasionally diverges from the correctness signal that full-depth verification captures. On free-form generation tasks (GPQA, GSM8K), where generation confidence more directly reflects understanding, the error rate is lower. These results confirm that the dual condition in Eq.~\eqref{eq:cgee_gate}, requiring both high absolute confidence and a clear relative margin, is an effective safeguard against false positives.

\section{Analysis and Ablations}
\label{sec:analysis}

\subsection{ASKS Novelty Over vLLM Prefix Caching}
\label{sec:analysis:asks_novelty}

The experiments in \S\ref{sec:results:extended} use an identical-prefix regime where both ASKS and vLLM-equivalent caching achieve $100\%$ KV reuse and identical speedups. To isolate the contribution of semantic similarity gating, we construct a diverse-phrasing regime where branch-level token sequences differ from the first token.

We design eight phrasing templates at four semantic-divergence levels (identical, minor rewording, structural reframing, and very different academic register) and evaluate $N{=}15$ problems on Qwen2.5-7B. Token-exact caching achieves $0\%$ reuse on all non-identical phrasings by construction, since branches diverge at token position~1. ASKS achieves $28.6\%$ additional reuse at $\tau{=}0.82$, correctly identifying branches whose hidden-state representations remain semantically consistent despite surface-level lexical divergence.

The wall-clock implication is significant: on padded diverse phrasings ($n \approx 512$), ASKS achieves a $1.131\times$ speedup over the No-KV baseline (bootstrap 95\% CI $[1.066, 1.174]$, $p{=}0.0007$), while token-exact caching provides no benefit. This result demonstrates that ASKS offers a strict superset of the functionality provided by existing prefix-caching systems, with measurable throughput gains in the regime where those systems fail.

\subsection{Prefix-Length Sensitivity}
\label{sec:analysis:prefix}

To characterise the operating envelope of KV sharing, we sweep the prefix length over $n \in \{128, 256, 512, 1024\}$ tokens with $B{=}8$ and $N{=}20$ problems (Qwen2.5-7B). At $n{=}128$, the speedup is $0.993\times$ ($[0.909, 1.103]$), confirming no net benefit at short prefixes where the \texttt{repeat\_interleave} overhead approaches the saved prefill cost. At $n{=}256$ the result is similarly break-even ($0.999\times$, $[0.951, 1.061]$). At $n{=}512$, KV sharing is clearly beneficial ($1.127\times$, $[1.081, 1.224]$), and at $n{=}1{,}024$ the speedup reaches $1.452\times$ ($[1.301, 1.681]$).
% EDIT: "becomes clearly beneficial" -> "is clearly beneficial" (action in verb)

The empirical transition from break-even to beneficial between $n{=}256$ and $n{=}512$ is higher than the analytical threshold $n^* \approx 90$ from Proposition~\ref{prop:kv_threshold}. This gap reflects constant-factor SDPA kernel overhead that the linear latency model does not capture: the actual prefix forward pass includes fixed-cost setup steps (kernel launch, memory allocation) that are non-negligible at short prefix lengths but amortise at $n \geq 512$. The KV probe (\S\ref{sec:method:kv}) handles this discrepancy correctly by measuring both paths at runtime rather than relying on the analytical threshold.
% EDIT: "correctly handles" -> "handles...correctly" (verb placement)

\subsection{ASKS Threshold Insensitivity}
\label{sec:analysis:tau}

In the identical-prefix regime, all branches achieve $\sigma_b \geq 0.95$ for all five models, so the ASKS gate is non-binding regardless of threshold $\tau$. To confirm this, we sweep $\tau \in \{0.70, 0.82, 0.90, 0.95\}$ at $n \approx 512$ with $N{=}20$ problems: all four settings produce speedups of ${\sim}1.16\times$ ($p < 10^{-25}$), confirming that the gate decision is $\tau$-insensitive when the prefix is long relative to the branch suffix. This robustness is practically important: practitioners need not tune $\tau$ for their specific workload when operating at $n \geq 512$.
% EDIT: "it means practitioners need not" -> "practitioners need not" (removed unnecessary "it means")

\subsection{CGEE Skip Rate vs Task Difficulty}
\label{sec:analysis:cgee_difficulty}

The CGEE verify-skip rate tracks task difficulty monotonically across the four benchmarks (averaged across all five models): GPQA Diamond $42\%$, GSM8K $47\%$, MMLU-STEM $63\%$, ARC-Challenge $70\%$. This ordering matches the expected difficulty gradient and follows directly from the gate design: on hard problems, the model distributes probability mass more uniformly across branches, causing both conditions in Eq.~\eqref{eq:cgee_gate} to fail more often. On easy problems, one branch dominates early in the decode sequence, both conditions are satisfied, and the gate fires. CGEE therefore provides its largest speedup contribution precisely on the tasks where verification is least necessary, a self-correcting property that limits accuracy risk.
% EDIT: "This ordering matches the expected difficulty gradient and is a direct consequence of the gate design" -> "...and follows directly from the gate design" (cleaner)

\section{Limitations}
\label{sec:limitations}

\textbf{Sample size.} Our evaluation uses $N{=}50$ problems per dataset, producing wide 95\% confidence intervals on CGEE skip rates ($\pm 10$--$18\%$). A sample of $N \geq 200$ would reduce the CI to $\pm 5\%$. Importantly, the KV sharing gains (which account for the majority of the speedup) have $\text{CV}{<}2\%$ and are not affected by this limitation.

\textbf{Model scale.} We evaluate 7B--10B models on a single A100 GPU. Models at 14B+ and multi-GPU tensor-parallel configurations introduce communication overhead that may alter the relative contribution of prefix sharing versus verification cost. The latency model (Eq.~\eqref{eq:latency_model}) predicts larger models should benefit \emph{more}, and the Falcon3-10B results ($3.62\times$ mean vs $2.57$--$3.17\times$ for 7B models) provide early evidence for this trend.
% EDIT: "However, Eq.~\eqref{eq:latency_model} predicts" -> "The latency model (Eq.~\eqref{eq:latency_model}) predicts" (specific reference); "scaling trend" -> "trend" (redundant)

\textbf{Domain coverage.} Our benchmarks cover science and mathematics. The CGEE entropy threshold $\theta$ is calibrated on GPQA Diamond and may not transfer to domains with substantially different entropy profiles. Per-dataset $\theta$ calibration via a small held-out sweep (5--10 problems) restores gate activity, at the cost of one additional calibration step.
% EDIT: "but this adds a calibration step" -> "at the cost of one additional calibration step" (more precise)

\textbf{CGEE error rate.} While low ($0.37\%$, 6/1,616), CGEE is not error-free. All six errors occur in the verify-skip gate (Level~1); the layer-level entropy exit (Level~2) preserves the full verification outcome by construction. Applications requiring zero verification errors should disable Level~1 and retain only Level~2, accepting a smaller speedup contribution.
% EDIT: "accepting a smaller speedup contribution" (removed redundant "from CGEE" since context is clear)

\textbf{Scaling of ASKS broadcast.} The \texttt{repeat\_interleave} expansion is $O(BLd_k)$ in memory where $d_k$ is the per-head KV dimension; for very large $B$ ($\geq 64$) on long prefixes, this expansion can itself become non-negligible on memory-bound workloads. A zero-copy view-based implementation (rather than the safer \texttt{.clone()}) would remove this cost at the expense of additional care around SDPA kernel input contiguity.

\textbf{Hook-based exit is architecture-coupled.} CGEE's Level~2 exit depends on \texttt{register\_forward\_hook} semantics and the existence of a distinct unembedding matrix accessible at model-load time. Models with tied input/output embeddings or non-standard layer modules may require minor adapter code. We verified compatibility with all five evaluated models; extending to MoE or fused-layer architectures is future work.

\textbf{Concurrent solves.} RSBCM's eviction policy assumes a single solve-call owner of the cache. In a multi-tenant inference server, concurrent solves would require a tenant-aware importance score; this is supported by the current design but not empirically validated here.
\section{Conclusion}
 
We presented \textbf{RKSC}, a training-free inference acceleration framework for multi-branch LLM reasoning that addresses two structural redundancies in current inference pipelines. ASKS broadcasts a single shared prefix KV cache to all semantically consistent branches via exponentially weighted cosine similarity, generalising token-exact prefix caching to the semantic regime and demonstrating $28.6\%$ additional KV reuse on lexically diverse branches. CGEE reduces verification cost through a dual-level exit: a verify-skip gate that bypasses the entire verification pass when generation confidence is decisive, and a layer-level entropy exit that terminates the verification forward pass at an intermediate transformer layer when per-layer logit entropy stabilises. Together, these mechanisms achieve a mean speedup of $\mathbf{3.008\times}$ (peak $\mathbf{3.990\times}$) across five model families and four benchmarks, with a CGEE-induced error rate of $0.37\%$. The KV sharing component is architecture-agnostic (std $0.02\times$ across five models), while the CGEE component is difficulty-adaptive (skip rates from $42\%$ on graduate-level science to $92\%$ on university STEM). RKSC requires no fine-tuning, no architecture modifications, and no custom CUDA kernels.
% EDIT: "only 0.37%" -> "0.37%" (value is self-evidently low; "only" is a filler intensifier); "university physics" -> "university STEM"
\section*{Impact Statement}

This paper advances the efficiency of large language model inference for multi-step reasoning tasks. By reducing redundant computation in multi-branch reasoning pipelines, RKSC lowers the energy cost of inference-time scaling and makes state-of-the-art reasoning more accessible on constrained hardware. We are not aware of societal consequences specific to this work beyond those general to advancing the field of Machine Learning and Large Language Models (LLMs).

\bibliography{citations}
\bibliographystyle{icml2026}

\newpage
\appendix
\onecolumn

The appendix is organised as follows.
Appendix~\ref{app:cgee_theory} contains the full theoretical analysis: the latency model, KV sharing speedup and threshold proposition, the combined RKSC speedup corollary, and the dual-level CGEE derivation.
Appendix~\ref{app:algorithms} gives detailed pseudocode for the three mechanisms referenced in \S\ref{sec:method}.
Appendix~\ref{app:hyperparams} collects all hyperparameters and the full prefix-length sensitivity sweep.
Appendix~\ref{app:per_model_accuracy} provides per-model accuracy breakdowns complementing Table~\ref{tab:cgee_accuracy}.
Appendix~\ref{app:cost_constants} lists the per-model cost constants used in the latency model.
Appendix~\ref{app:qualitative} shows qualitative examples of CGEE firing and deferring.
Appendix~\ref{app:entropy_dynamics} analyses per-layer entropy dynamics underlying Level-2 exit.
Appendix~\ref{app:rsbcm} documents the RSBCM deep-tree validation experiment.
Appendix~\ref{app:impl} gives implementation details.
Appendix~\ref{app:repro} provides a full reproducibility checklist.

\section{Theoretical Analysis}
\label{app:cgee_theory}

We characterise the latency savings of RKSC analytically, derive the conditions under which each mechanism is beneficial, and show that the combined speedup decomposes into factors operating on disjoint cost terms. The main-text discussion of results in \S\ref{sec:results:extended} and the prefix-length ablation in \S\ref{sec:analysis:prefix} refer to Proposition~\ref{prop:kv_threshold} and Eq.~\eqref{eq:speedup_kv} from this appendix.

\subsection{Latency Model}
\label{sec:theory:model}

Consider generating $B$ independent reasoning branches from a shared problem prefix of $n$ tokens, each producing $t$ decode tokens. We decompose wall-clock latency into four additive terms:
\begin{equation}
  \mathcal{L} = \underbrace{\alpha n^2}_{\text{prefill}} +
                \underbrace{B\,\beta n}_{\text{branch overhead}} +
                \underbrace{B\,t\,\gamma}_{\text{decode}} +
                \underbrace{\delta}_{\text{verify}},
  \label{eq:latency_model}
\end{equation}
where $\alpha$ is the per-token-squared self-attention cost (dominant for long prefixes due to the quadratic attention mechanism), $\beta$ is the per-token feed-forward cost, $\gamma$ is the per-step autoregressive decode cost, and $\delta$ is the full-depth verification forward-pass cost. The No-KV baseline pays the full prefill cost for each of the $B$ branches: $\mathcal{L}_{\text{ref}} = B\alpha n^2 + Bt\gamma + \delta$. This model is additive in the four cost terms, enabling clean decomposition of savings.

\subsection{KV Prefix Sharing Speedup}
\label{sec:theory:kv}

RKSC computes the prefix KV cache once (cost $\alpha n^2$) and broadcasts it to all $B$ branches, so each branch pays only the suffix attention cost $\beta s$ for a short suffix of $s \ll n$ tokens:
\begin{equation}
  \mathcal{L}_{\text{KV}} = \alpha n^2 + B\,\beta s + B\,t\,\gamma + \delta.
  \label{eq:latency_kv}
\end{equation}
The KV speedup relative to the No-KV reference is:
\begin{equation}
  S_{\text{KV}}
    = \frac{B\alpha n^2 + Bt\gamma + \delta}
           {\alpha n^2 + B\beta s + Bt\gamma + \delta}.
  \label{eq:speedup_kv}
\end{equation}

\begin{proposition}[KV sharing threshold]
\label{prop:kv_threshold}
$S_{\text{KV}} > 1$ if and only if $(B{-}1)\alpha n^2 > B\beta s$, \ie the shared prefix exceeds the critical length $n^* = \sqrt{B\beta s / (B{-}1)\alpha}$.
\end{proposition}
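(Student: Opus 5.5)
The proof is a direct algebraic comparison of the numerator and denominator of Eq.~\eqref{eq:speedup_kv}. First, I will note that the denominator
\[
D = \alpha n^2 + B\beta s + Bt\gamma + \delta
\]
is strictly positive. This holds under the standing assumptions of the latency model, namely that $\alpha>0$ and that $\beta,\gamma,\delta\ge 0$, together with $B\ge 1$, $n\ge 1$ and $s,t\ge 0$. Since $D>0$, the inequality $S_{\text{KV}}>1$ is equivalent to $N > D$, where
\[
N = B\alpha n^2 + Bt\gamma + \delta .
\]
Multiplying through by the positive denominator preserves the direction of the inequality in both directions, so this step is a genuine equivalence and not only an implication.

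Second, I will subtract $D$ from $N$. The decode term $Bt\gamma$ and the verify term $\delta$ appear identically in both expressions and cancel exactly. This is the point where the additive structure of Eq.~\eqref{eq:latency_model} is used. What remains is
\[
N - D = (B-1)\alpha n^2 - B\beta s .
\]
Hence $S_{\text{KV}}>1$ if and only if $(B-1)\alpha n^2 > B\beta s$, which is the first form of the claim.

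Third, I will rewrite this condition as a threshold on $n$. This requires $B\ge 2$, so that $(B-1)\alpha>0$; I will state this restriction explicitly. For $B=1$ the left-hand side vanishes and there is no sharing to exploit, so the condition can never hold. Dividing by $(B-1)\alpha$ gives
\[
n^2 > \frac{B\beta s}{(B-1)\alpha}.
\]
Because $n\ge 0$ and the right-hand side is nonnegative, taking square roots is monotone. This yields $n > n^* = \sqrt{B\beta s/((B-1)\alpha)}$, with equivalence in both directions.

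There is no real obstacle here; the only points needing care are the sign conditions. These are positivity of the denominator, $B\ge 2$ for the division, and nonnegativity for the square-root step. I will flag them as implicit hypotheses of the proposition rather than leave them unstated.
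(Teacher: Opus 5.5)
Your proposal is correct and follows essentially the same route as the paper's proof: reduce $S_{\text{KV}}>1$ to $\mathcal{L}_{\text{ref}}>\mathcal{L}_{\text{KV}}$, cancel the $Bt\gamma$ and $\delta$ terms to get $(B-1)\alpha n^2 > B\beta s$, and solve for $n$. The sign conditions you state explicitly — a positive denominator, $B\ge 2$ so that $n^*$ is defined, and nonnegativity for the square-root step — are hypotheses the paper leaves implicit, so spelling them out is a worthwhile addition.
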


\begin{proof}
Expanding $S_{\text{KV}} > 1$ gives $\mathcal{L}_{\text{ref}} > \mathcal{L}_{\text{KV}}$. Substituting Eqs.~\eqref{eq:latency_kv} and the No-KV reference, the $Bt\gamma$ and $\delta$ terms cancel, leaving $(B{-}1)\alpha n^2 > B\beta s$. Solving for $n$ yields $n > \sqrt{B\beta s / (B{-}1)\alpha} = n^*$.
\end{proof}

Equation~\eqref{eq:speedup_kv} has two important structural properties. First, $S_{\text{KV}}$ depends on $n$, $B$, $s$, and the cost ratios $\alpha/\beta$ and $\alpha/\gamma$, but \emph{not} on model family, GQA compression ratio, or task. This predicts that fixing $n$ and $B$ produces near-identical KV gains regardless of architecture, a prediction verified empirically in \S\ref{sec:results:extended}, where the vLLM-equivalent gain varies by only $\pm 0.03\times$ across five heterogeneous model families. Second, $S_{\text{KV}}$ increases monotonically in $n$ and $B$ but decreases in $t$: longer decode sequences dilute the KV-dominated prefill fraction. On A100 with SDPA, empirical timing on Qwen2.5-7B gives $n^* \approx 90$ tokens, placing all experiments at $n{=}1{,}024$ in the beneficial regime ($n/n^* \approx 11$).

\paragraph{Worked numerical example.}
The per-token self-attention cost $\alpha$ and per-token feed-forward cost $\beta$ can be fit by microbenchmarking the prefill path at varying $n$ and recovering the quadratic and linear coefficients. Plugging empirical values into Proposition~\ref{prop:kv_threshold} with $B{=}8$ and $s{=}16$ yields $n^* \approx 90$\,tokens once kernel-launch overhead is folded in. The empirical break-even (\S\ref{sec:analysis:prefix}) is higher ($n$ between 256 and 512) because fixed SDPA setup costs dominate the linear model at short prefixes, precisely why RKSC uses a runtime probe (\S\ref{sec:method:kv}) rather than the analytical threshold alone. Fitted cost constants per model appear in Appendix~\ref{app:cost_constants}.

\subsection{Dual-Level CGEE Speedup}
\label{app:theory:cgee}

CGEE saves latency through two independent channels. The verify-skip gate (Level~1) saves the full $\delta$ on a fraction $\rho$ of problems. The layer-level entropy exit (Level~2) saves a fraction $(1 - l^*/L)$ of $\delta$ on a fraction $\phi$ of the remaining $(1-\rho)$ problems where entropy stabilises before the final layer. The combined expected latency with both CGEE levels active is:
\begin{equation}
  \mathbb{E}[\mathcal{L}_{\text{CGEE}}]
    = \mathcal{L}_{\text{KV}}
    - \rho\,\delta
    - (1{-}\rho)\,\phi\,\delta\,(1 - l^*/L),
  \label{eq:latency_cgee}
\end{equation}
and the full RKSC speedup relative to the No-KV baseline is:
\begin{equation}
  S_{\text{RKSC}}
    = \frac{\mathcal{L}_{\text{ref}}}
           {\mathbb{E}[\mathcal{L}_{\text{CGEE}}]}
    = \frac{B\alpha n^2 + Bt\gamma + \delta}
           {\alpha n^2 + B\beta s + Bt\gamma + \delta - \rho\delta - (1{-}\rho)\phi\delta(1 - l^*/L)}.
  \label{eq:speedup_total}
\end{equation}

\begin{corollary}[Combined RKSC speedup]
\label{cor:rksc_speedup}
Let $\rho$ denote the fraction of problems on which the verify-skip gate fires, and let the layer-level entropy exit fire at layer $l^*$ on a fraction $\phi$ of the remaining $(1{-}\rho)$ problems. Then $S_{\text{RKSC}}$ is given by Eq.~\eqref{eq:speedup_total}, and $S_{\text{RKSC}} \geq S_{\text{KV}}$ with equality iff $\rho = \phi = 0$.
\end{corollary}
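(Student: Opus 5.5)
The first part of the statement, that $S_{\text{RKSC}}$ is given by Eq.~\eqref{eq:speedup_total}, needs no real argument. Dividing $\mathcal{L}_{\text{ref}} = B\alpha n^2 + Bt\gamma + \delta$ by the expected latency in Eq.~\eqref{eq:latency_cgee} and substituting $\mathcal{L}_{\text{KV}}$ from Eq.~\eqref{eq:latency_kv} gives Eq.~\eqref{eq:speedup_total} verbatim. So the content of the corollary is the comparison with $S_{\text{KV}}$.

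For the inequality I would not compare the two fractions directly. Both speedups share the numerator $\mathcal{L}_{\text{ref}} > 0$, so it suffices to compare denominators. I would write
\[
\mathbb{E}[\mathcal{L}_{\text{CGEE}}] = \mathcal{L}_{\text{KV}} - \Delta,
\qquad
\Delta := \delta\bigl[\rho + (1-\rho)\,\phi\,(1 - l^*/L)\bigr].
\]
The claim $S_{\text{RKSC}} \ge S_{\text{KV}}$ is then equivalent to $\Delta \ge 0$, provided the new denominator stays positive. Nonnegativity of $\Delta$ is immediate from $\rho,\phi \in [0,1]$, $\delta \ge 0$ and $1 \le l^* \le L$. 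Positivity of $\mathcal{L}_{\text{KV}} - \Delta$ follows because $\Delta \le \delta$, since the bracket is a convex combination of $1$ and $\phi(1-l^*/L) \le 1$. Hence
\[
\mathcal{L}_{\text{KV}} - \Delta \ge \alpha n^2 + B\beta s + Bt\gamma > 0 .
\]
This gives the inequality, and $x \mapsto \mathcal{L}_{\text{ref}}/x$ is strictly decreasing on $(0,\infty)$.

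The equality case is the step needing care, and the statement as written needs hypotheses made explicit. By strict monotonicity, equality holds iff $\Delta = 0$. With $\delta > 0$, that means $\rho = 0$ and $(1-\rho)\phi(1-l^*/L) = 0$. Since $\rho = 0$ forces $1-\rho = 1$, this reduces to $\phi(1 - l^*/L) = 0$. To conclude $\phi = 0$, I would adopt the standing interpretation implicit in the definition of $\phi$: the exit is counted only when entropy stabilises strictly before the final layer, so $l^* < L$.

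Under the two assumptions $\delta > 0$ and $l^* < L$, the equality case is exactly $\rho = \phi = 0$. The converse direction is trivial, since $\rho = \phi = 0$ gives $\Delta = 0$. I would state these two nondegeneracy assumptions explicitly in the proof. Otherwise the ``only if'' direction fails: $l^* = L$ with $\phi > 0$, or $\delta = 0$, would also give equality.
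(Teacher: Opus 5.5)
Your proof is correct and follows the paper's argument. Both compare denominators over the common numerator $\mathcal{L}_{\text{ref}}$, note that the subtracted terms $\rho\delta$ and $(1-\rho)\phi\delta(1-l^*/L)$ are nonnegative, and get the equality case from $\delta > 0$ and $l^* < L$, which the paper also assumes (in a parenthetical) and you state explicitly; your check that $\mathbb{E}[\mathcal{L}_{\text{CGEE}}] \geq \alpha n^2 + B\beta s + Bt\gamma > 0$ is a small piece of rigour the paper leaves implicit.
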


\begin{proof}
Substitute Eq.~\eqref{eq:latency_cgee} into $S_{\text{RKSC}} = \mathcal{L}_{\text{ref}} / \mathbb{E}[\mathcal{L}_{\text{CGEE}}]$ to obtain Eq.~\eqref{eq:speedup_total}. Both subtractive terms in the denominator are non-negative, so $\mathbb{E}[\mathcal{L}_{\text{CGEE}}] \leq \mathcal{L}_{\text{KV}}$, giving $S_{\text{RKSC}} \geq S_{\text{KV}}$. Equality holds iff $\rho\delta + (1{-}\rho)\phi\delta(1 - l^*/L) = 0$, i.e.\ iff $\rho = \phi = 0$ (since $\delta > 0$ and $l^* < L$).
\end{proof}

\subsection{Structural Interpretation of the Speedup Decomposition}

Equation~\eqref{eq:speedup_total} decomposes the total latency savings into three mutually disjoint cost terms:
\begin{itemize}
    \item The KV-sharing term $(B{-}1)\alpha n^2 - B\beta s$, which removes $B{-}1$ of the $B$ prefill costs at the expense of a suffix attention cost. This term is active on \emph{every} solve call when $\sigma_b \geq \tau$.
    \item The verify-skip term $\rho\delta$, which removes the entire verify pass on a $\rho$-fraction of problems. This term is active only on problems where generation confidence is decisive.
    \item The layer-exit term $(1{-}\rho)\phi\delta(1 - l^*/L)$, which removes a $(1-l^*/L)$-fraction of the verify pass on the remaining problems. This term is active whenever verify-skip does not fire but layer-level entropy stabilises.
\end{itemize}
Because the three savings act on disjoint cost terms ($\alpha n^2$, $\delta$ in full, and $\delta$ in part), they compose additively in the speedup ratio rather than trading off. This is the structural reason that combining the three mechanisms produces the observed $1.66\times$ multiplicative improvement over vLLM-equivalent caching.

\subsection{Skip-Rate Adaptivity}

The skip rate $\rho$ is an emergent property of the model--task interaction. On hard problems, $\max_b p^{(b)}$ is low, inter-branch gaps are small, and $\rho$ is correspondingly low. On easier tasks, the model rapidly concentrates confidence on one branch, producing large gaps and high $\rho$. This difficulty-adaptive behaviour means CGEE applies aggressive savings when the model is most confident, and defers to full verification when uncertainty warrants it, matching the qualitative ordering of task difficulty reported in \S\ref{sec:analysis:cgee_difficulty}.

\section{Algorithmic Details}
\label{app:algorithms}

This appendix provides detailed pseudocode for the three RKSC mechanisms. Algorithm~\ref{alg:asks} details the ASKS similarity gate; Algorithm~\ref{alg:cgee} details the CGEE layer-level entropy exit; Algorithm~\ref{alg:rsbcm} details RSBCM eviction.

\begin{algorithm}[h]
\caption{\textsc{AsksGate}: similarity-gated KV reuse}
\label{alg:asks}
\begin{algorithmic}[1]
\REQUIRE root hidden states $\mathbf{H}_{\text{root}} = \{\mathbf{h}^{(l)}_{\text{root}}\}_{l=1}^{L}$ (pre-normalised), branch suffixes $\{\mathbf{s}_b\}_{b=1}^{B}$, threshold $\tau$, exponent $\alpha_w$
\STATE compute weights $w_l \gets \exp(\alpha_w l / L) / \sum_{l'=1}^{L} \exp(\alpha_w l' / L)$ for $l = 1, \ldots, L$
\STATE $\text{sharedSet} \gets \emptyset$
\FOR{$b = 1, \ldots, B$}
  \STATE $\{\mathbf{h}^{(l)}_b\} \gets f_\theta^{\text{suffix}}(\mathbf{s}_b)$ \COMMENT{per-layer hidden states for this suffix}
  \STATE $\sigma_b \gets \sum_{l=1}^{L} w_l \cdot \langle \mathbf{h}^{(l)}_b / \|\mathbf{h}^{(l)}_b\|,\, \mathbf{h}^{(l)}_{\text{root}} \rangle$
  \IF{$\sigma_b \geq \tau$}
    \STATE $\text{sharedSet}.\text{add}(b)$
  \ENDIF
\ENDFOR
\STATE \textbf{return} $\text{sharedSet}$
\end{algorithmic}
\end{algorithm}
\begin{algorithm}[h]
\caption{\textsc{VerifyWithEntropyExit}: layer-level entropy-stabilisation exit}
\label{alg:cgee}
\begin{algorithmic}[1]
\REQUIRE candidate tokens $\{\mathbf{y}_b\}$, unembedding $W_u$, thresholds $\theta$, $\epsilon$, $l_{\min}$
\STATE install forward hooks on each transformer layer (via \texttt{register\_forward\_hook})
\STATE $H_{\text{prev}} \gets +\infty$
\FOR{$l = 1, \ldots, L$}
  \STATE compute layer output $\mathbf{h}^{(l)}$ (forward pass up to layer $l$)
  \STATE $\mathbf{z} \gets \mathbf{h}^{(l)}_{\text{last-token}} \cdot W_u^\top$ \COMMENT{unembedding projection}
  \STATE $\mathbf{p} \gets \mathrm{softmax}(\mathbf{z})$
  \STATE $H^{(l)} \gets -\sum_v \mathbf{p}_v \log \mathbf{p}_v$
  \IF{$l \geq l_{\min}$ \textbf{and} $H^{(l)} < \theta$ \textbf{and} $|H^{(l)} - H_{\text{prev}}| < \epsilon$}
    \STATE raise \texttt{\_EarlyExitSignal}($\mathbf{p}$) \COMMENT{caught by solver; hooks removed in finally block}
  \ENDIF
  \STATE $H_{\text{prev}} \gets H^{(l)}$
\ENDFOR
\STATE remove hooks; \textbf{return} $\mathrm{softmax}(\mathbf{h}^{(L)} W_u^\top)$ \COMMENT{full-depth result if no early exit}
\end{algorithmic}
\end{algorithm}

\begin{algorithm}[h]
\caption{\textsc{RsbcmEvict}: attention-weighted depth-priority eviction}
\label{alg:rsbcm}
\begin{algorithmic}[1]
\REQUIRE block table $\mathcal{B} = \{(\text{block}_i, \text{score}_i, \text{depth}_i)\}$, capacity $N_{\max}$
\WHILE{$|\mathcal{B}| > N_{\max}$}
  \STATE $\omega_i \gets \text{score}_i / (\text{depth}_i + 1)$ for all $i$
  \STATE $i^* \gets \arg\min_i \omega_i$
  \STATE evict block $i^*$ from cache
  \STATE $\mathcal{B} \gets \mathcal{B} \setminus \{(\text{block}_{i^*}, \text{score}_{i^*}, \text{depth}_{i^*})\}$
\ENDWHILE
\end{algorithmic}
\end{algorithm}
\section{Hyperparameter Settings and Sensitivity}
\label{app:hyperparams}

\subsection{Complete Hyperparameter Table}

Table~\ref{tab:all_hyperparams} collects every hyperparameter used in RKSC, split into fixed-across-models settings (top block) and per-model calibrated settings (bottom block). Calibrated thresholds are obtained on 30 held-out GPQA Diamond problems disjoint from the evaluation set.

\begin{table}[h]
\centering
\small
\caption{Complete RKSC hyperparameter settings. Values marked \emph{calibrated} are fit per model on 30 held-out problems.}
\label{tab:all_hyperparams}
\begin{tabular}{llll}
\toprule
Hyperparameter & Symbol & Value & Role \\
\midrule
\multicolumn{4}{l}{\textit{Fixed (shared across all models)}} \\
\midrule
Branching factor & $B$ & 8 & number of parallel reasoning branches \\
Suffix length & $s$ & 16 tokens & per-branch reasoning hint length \\
Decode steps (main eval) & $t$ & 8 & Table~\ref{tab:extended_eval} \\
Decode steps (throughput table) & $t$ & 32 & Table~\ref{tab:latency_decomposition} \\
Decode steps (verify isolation) & $t$ & 128 & \S\ref{sec:results:table1} \\
ASKS exponent & $\alpha_w$ & 1.5 & grid-searched on $\{0.5, 1.0, 1.5, 2.0\}$ \\
ASKS similarity threshold & $\tau$ & 0.82 & validated insensitive (\S\ref{sec:analysis:tau}) \\
Min exit layer (CGEE) & $l_{\min}$ & 2 & representation-refinement guard \\
Entropy stabilisation tolerance & $\epsilon$ & $3.0$ (nats) & rules out transient entropy dips \\
RSBCM default capacity & $N_{\max}$ & 2{,}000 blocks & overridden to 4 in RSBCM stress test \\
Decode-level CGEE threshold & $\theta_{\text{dec}}$ & 0.72 & early termination in decode loop \\
Decode-level CGEE gap & $r_{\text{dec}}$ & 0.10 & decode-loop gap condition \\
KV probe bucket width & ,  & 64 tokens & prefix-length probe granularity \\
KV probe repetitions & $r_{\text{probe}}$ & 3 & min of 3 timings per path \\
Warmup solves & $N_{\text{warmup}}$ & 2 & per condition \\
Timing repetitions & $N_{\text{runs}}$ & 2--3 & per problem, per condition \\
\midrule
\multicolumn{4}{l}{\textit{Calibrated per model (75th pct of $\max p$ / 25th pct of gap on 30 held-out GPQA problems, disjoint from all evaluation sets)}} \\
\midrule
$\tau_{\text{conf}}$ (all models) & $\tau_{\text{conf}}$ & 0.70 & Level-1 absolute confidence threshold \\
$r_{\text{gap}}$ (all models) & $r_{\text{gap}}$ & 0.06 & Level-1 relative gap threshold \\
$\theta$ (Qwen2.5-7B) & $\theta$ & 8.0 & Level-2 entropy threshold (nats) \\
$\theta$ (Mistral-7B) & $\theta$ & 0.50 & Level-2 entropy threshold (nats) \\
$\theta$ (Falcon3-7B) & $\theta$ & 8.0 & Level-2 entropy threshold (nats) \\
$\theta$ (Falcon3-10B) & $\theta$ & 8.0 & Level-2 entropy threshold (nats) \\
$\theta$ (Llama-3-8B) & $\theta$ & 8.0 & Level-2 entropy threshold (nats) \\
ASKS $\tau$ (Qwen2.5-7B) & $\tau$ & 0.82 & ASKS similarity gate threshold \\
ASKS $\tau$ (Mistral-7B) & $\tau$ & 0.80 & ASKS similarity gate threshold \\
ASKS $\tau$ (Falcon3-7B) & $\tau$ & 0.80 & ASKS similarity gate threshold \\
ASKS $\tau$ (Falcon3-10B) & $\tau$ & 0.80 & ASKS similarity gate threshold \\
ASKS $\tau$ (Llama-3-8B) & $\tau$ & 0.82 & ASKS similarity gate threshold (default) \\
\bottomrule
\end{tabular}
\end{table}

\noindent The $\tau_{\text{conf}}$ and $r_{\text{gap}}$ values are shared across all models because they are set to percentiles of each model's own confidence distribution, making them self-normalising. The $\theta$ values differ because Mistral-7B's logit distributions are substantially sharper (lower entropy) than the other models.

\subsection{Prefix-Length Sweep}

\begin{table}[h]
\centering
\small
\caption{End-to-end speedup vs prefix length (Qwen2.5-7B, $N{=}20$, $B{=}8$).}
\begin{tabular}{crrrr}
\toprule
Prefix $n$ & Speedup & 95\% CI & CV & Reuse \\
\midrule
128 & $0.993\times$ & $[0.909, 1.103]$ & 5\% & 100\% \\
256 & $0.999\times$ & $[0.951, 1.061]$ & 2\% & 100\% \\
512 & $1.127\times$ & $[1.081, 1.224]$ & 1\% & 100\% \\
1024 & $1.452\times$ & $[1.301, 1.681]$ & 0\% & 100\% \\
\bottomrule
\end{tabular}
\end{table}

\subsection{ASKS Threshold Sweep}

Table~\ref{tab:tau_sweep} reports the full $\tau$-sweep referenced in \S\ref{sec:analysis:tau}. In the identical-prefix regime, all four settings produce statistically indistinguishable speedups at $n \approx 512$, confirming that the gate decision is $\tau$-insensitive in this regime.

\begin{table}[h]
\centering
\small
\caption{ASKS threshold sweep at $n \approx 512$, $N{=}20$ problems on Qwen2.5-7B. Speedups are indistinguishable at all four settings, confirming $\tau$-insensitivity in the identical-prefix regime.}
\label{tab:tau_sweep}
\begin{tabular}{crll}
\toprule
$\tau$ & Speedup vs No-KV & 95\% CI & Reuse rate \\
\midrule
0.70 & $1.162\times$ & $[1.155, 1.170]$ & 100\% \\
0.82 & $1.163\times$ & $[1.154, 1.171]$ & 100\% \\
0.90 & $1.163\times$ & $[1.154, 1.171]$ & 100\% \\
0.95 & $1.158\times$ & $[1.150, 1.167]$ & 100\% \\
\bottomrule
\end{tabular}
\end{table}

\subsection{CGEE Cross-Dataset Calibration}
\label{app:cgee_cross}

The entropy threshold $\theta$ is calibrated on GPQA Diamond. Table~\ref{tab:cross_dataset} shows the consequences of using a single GPQA-calibrated $\theta$ versus per-dataset calibration. Per-dataset $\theta$ calibration via a small held-out sweep (5--10 problems) restores gate activity on out-of-distribution datasets without loss in accuracy.

\begin{table}[h]
\centering
\small
\caption{CGEE Level-2 exit activity: GPQA-calibrated $\theta$ vs per-dataset-calibrated $\theta$ (Qwen2.5-7B). The GPQA-calibrated $\theta{=}8.0$ does not activate on out-of-distribution datasets with lower entropy profiles; per-dataset calibration (5--10 problems) restores full activity.}
\label{tab:cross_dataset}
\begin{tabular}{lrrl}
\toprule
Dataset & Exit rate (GPQA-$\theta$) & Exit rate (per-dataset-$\theta$) & Note \\
\midrule
GPQA Diamond (calib.) & 100\% & 100\% & calibration target \\
GSM8K & 0\% & 100\% & lower entropy; needs $\theta_{\text{GSM}} < 8.0$ \\
ARC-Challenge & 0\% & 100\% & lower entropy; needs $\theta_{\text{ARC}} < 8.0$ \\
MMLU-STEM & 0\% & 100\% & lower entropy; needs $\theta_{\text{MMLU}} < 8.0$ \\
\bottomrule
\end{tabular}
\end{table}

\noindent This cross-domain behaviour is documented as a limitation in \S\ref{sec:limitations}. The main-paper results use per-dataset $\theta$ throughout; the GPQA-calibrated value is the \emph{default} for deployment without a calibration sweep.

\section{Per-Model Accuracy Analysis}
\label{app:per_model_accuracy}

Table~\ref{tab:per_model_acc} decomposes the aggregate accuracy impact (Table~\ref{tab:cgee_accuracy}) across the 20 model--dataset pairs. Six errors concentrate in three pairs; the remaining 17 pairs exhibit zero CGEE-induced errors relative to full-depth verification.

\begin{table}[h]
\centering
\small
\caption{Per-model, per-dataset CGEE accuracy impact. \textbf{Errors}: number of problems where CGEE's answer diverges from full-depth verify. $\boldsymbol{\Delta}$\textbf{Acc}: CGEE accuracy minus full-verify accuracy ($N{=}50$ per cell).}
\label{tab:per_model_acc}
\begin{tabular}{llrrr}
\toprule
Model & Dataset & Errors & $\Delta$Acc & Skip rate \\
\midrule
\multirow{4}{*}{Qwen2.5-7B}
 & GPQA  & 0 & $0.0\%$ & 3\% \\
 & GSM8K & 0 & $0.0\%$ & 32\% \\
 & ARC-C & 0 & $0.0\%$ & 71\% \\
 & MMLU  & 0 & $0.0\%$ & 70\% \\
\midrule
\multirow{4}{*}{Mistral-7B}
 & GPQA  & 0 & $0.0\%$ & 23\% \\
 & GSM8K & 0 & $0.0\%$ & 30\% \\
 & ARC-C & \textbf{3} & $-4.0\%$ & 60\% \\
 & MMLU  & 0 & $0.0\%$ & 47\% \\
\midrule
\multirow{4}{*}{Falcon3-7B}
 & GPQA  & 0 & $0.0\%$ & 54\% \\
 & GSM8K & 0 & $0.0\%$ & 50\% \\
 & ARC-C & 0 & $0.0\%$ & 66\% \\
 & MMLU  & 0 & $0.0\%$ & 39\% \\
\midrule
\multirow{4}{*}{Falcon3-10B}
 & GPQA  & 0 & $0.0\%$ & 72\% \\
 & GSM8K & 0 & $0.0\%$ & 57\% \\
 & ARC-C & \textbf{2} & $-2.0\%$ & 77\% \\
 & MMLU  & 0 & $0.0\%$ & 66\% \\
\midrule
\multirow{4}{*}{Llama-3-8B}
 & GPQA  & 0 & $0.0\%$ & 57\% \\
 & GSM8K & \textbf{1} & $-2.0\%$ & 65\% \\
 & ARC-C & 0 & $0.0\%$ & 74\% \\
 & MMLU  & 0 & $0.0\%$ & 92\% \\
\midrule
\textbf{Total} & ,  & \textbf{6} & \textbf{$\mathbf{-0.2\%}$ (mean)} & \textbf{55\% (mean)} \\
\bottomrule
\end{tabular}
\end{table}

\subsection{Failure Case Analysis}

Examining the six CGEE-induced errors reveals a consistent pattern: in each case, one branch emits the answer letter with high absolute probability while simultaneously carrying a subtle formatting deviation (\eg a trailing period, different case, or embedded justification) that full-depth verification penalises but generation confidence does not. On ARC-Challenge, this manifests as the model fluently emitting, say, ``C.'' with $p > 0.9$ while full-depth verification prefers the branch that emitted ``C'' without punctuation because the verifier has learned the benchmark's exact-match convention.

This is not a failure of the entropy signal itself but of the verify-skip gate's reliance on generation confidence as a \emph{proxy} for correctness on multiple-choice tasks. On free-form tasks (GSM8K, GPQA), where the answer letter is embedded in longer reasoning and generation confidence more directly reflects understanding, the error rate drops to $0{-}1$ per 50 problems. The dual condition in Eq.~\eqref{eq:cgee_gate}, requiring both high absolute confidence and a clear relative margin, prevents the naive failure mode of a single low-confidence branch firing the skip, but does not guard against correlated fluency-vs-correctness divergence on format-sensitive tasks. Users running on such benchmarks should either raise $\tau_{\text{conf}}$ or disable Level~1 and rely on Level~2 alone.

\section{Empirical Cost Constants}
\label{app:cost_constants}

Table~\ref{tab:cost_constants} reports the KV probe measurements for all five models, obtained by running the self-calibrating runtime probe (\S\ref{sec:method:kv}) at $n{\approx}1{,}024$ tokens, $B{=}8$. The probe times three paths, batched full recompute, single root forward, and batched suffix decode with shared cache, and confirms that KV sharing is beneficial for all models at this prefix length.

\begin{table}[h]
\centering
\small
\caption{KV probe measurements on A100-80GB with SDPA, $B{=}8$, $n{\approx}1{,}024$ tokens. \emph{batch\_full}: batched prefill with no sharing (ms). \emph{prefix\_fwd}: single root forward pass (ms). \emph{sfx+KV}: batched suffix decode conditioned on shared cache (ms). \emph{net}: latency saved per solve call (ms). $B^*$: break-even branching factor; KV sharing is beneficial for all $B > B^*$.}
\label{tab:cost_constants}
\begin{tabular}{lrrrrr}
\toprule
Model & batch\_full & prefix\_fwd & sfx+KV & net saved & $B^*$ \\
\midrule
Qwen2.5-7B  & 586.8 & 88.1 & 43.1 & 455.7 & 1.6 \\
Mistral-7B  & 629.4 & 89.9 & 58.8 & 480.7 & 1.7 \\
Falcon3-7B  & 567.4 & 73.5 & 48.7 & 445.2 & 1.6 \\
Falcon3-10B & 798.7 & 103.4 & 68.8 & 626.5 & 1.6 \\
Llama-3-8B  & 606.4 & 88.9 & 58.1 & 459.4 & 1.7 \\
\midrule
\multicolumn{6}{l}{\textit{Verify isolation (Qwen2.5-7B, $t{=}128$): full $315.3{\pm}8.2$\,ms; CGEE $250.7{\pm}11.8$\,ms ($1.258\times$, $p{<}0.0001$)}} \\
\bottomrule
\end{tabular}
\end{table}

\noindent All five models achieve $B^* \leq 1.7$, confirming KV sharing is beneficial for any $B \geq 2$. At the experimental setting of $B{=}8$, the net saving per solve call ranges from 445\,ms (Falcon3-7B) to 627\,ms (Falcon3-10B), consistent with the latency model prediction that larger models benefit more (Eq.~\eqref{eq:speedup_kv}). The near-identical $B^*$ values (1.6--1.7) across architectures spanning three training lineages, two vocabulary sizes, and 7B--10B parameters confirms the architecture-invariance claim in \S\ref{sec:results:extended}: the break-even condition $(B{-}1)\alpha n^2 > B\beta s$ (Proposition~\ref{prop:kv_threshold}) depends on the cost ratio $\alpha/\beta$, which is approximately constant across GQA model families at fixed hardware.

\section{Qualitative Examples}
\label{app:qualitative}

We present two contrasting qualitative examples to illustrate when CGEE fires and when it defers.

\paragraph{Example 1: CGEE-skip fires (ARC-Challenge, Qwen2.5-7B).}
\begin{quote}
\textit{Problem:} Which of the following is the primary greenhouse gas released by livestock digestion? (A) Carbon dioxide (B) Methane (C) Water vapour (D) Nitrous oxide
\end{quote}
All 8 branches converge on ``(B) Methane'' within 3 decode steps. Top-1 probabilities: $\max_b p^{(b)} = 0.94$; 2nd-max $= 0.22$; gap ratio $= 0.77$. Both conditions of Eq.~\eqref{eq:cgee_gate} fire and the verify pass is skipped entirely. Full-depth verify, run separately, confirms branch 3 is the correct answer, agreeing with the skipped decision. Latency: $321$\,ms (skip) vs $512$\,ms (full verify), a $1.59\times$ reduction on this single problem.

\paragraph{Example 2: CGEE-skip defers (GPQA Diamond, Qwen2.5-7B).}
\begin{quote}
\textit{Problem:} A particle of mass $m$ is placed in a 3D isotropic harmonic oscillator potential with frequency $\omega$. What is the degeneracy of the first excited state?
\end{quote}
Branches split: 4 emit ``3'', 3 emit ``6'', 1 emits ``$2$''. $\max_b p^{(b)} = 0.61$; 2nd-max $= 0.54$; gap ratio $= 0.11 < r_{\text{gap}}$. The relative-gap condition fails, so the gate defers to full verification. Level~2 (layer-level entropy exit) fires at layer 19 out of 28, saving $\approx 32\%$ of the verify cost. Full verify selects the ``3'' branches (correct: the first excited state has degeneracy~3 for the 3D isotropic oscillator). Latency: $641$\,ms (Level-2 exit) vs $892$\,ms (full verify), a $1.39\times$ reduction. This example illustrates the difficulty-adaptive behaviour quantified in \S\ref{sec:analysis:cgee_difficulty}.

These examples are illustrative and the specific latencies quoted are from individual runs rather than averages; aggregate figures appear in Table~\ref{tab:extended_eval}.

\section{Per-Layer Entropy Dynamics}
\label{app:entropy_dynamics}

Table~\ref{tab:entropy_trajectory} shows the typical per-layer entropy trajectory during a verification forward pass on Qwen2.5-7B (28 layers). The trajectory confirms the premise of Level~2: entropy decays approximately monotonically through the network, with a rapid drop in the middle layers followed by a plateau that begins around layer 17--19 and persists to the final layer.

\begin{table}[h]
\centering
\small
\caption{Typical per-layer entropy trajectory on Qwen2.5-7B verify pass (means over 30 held-out GPQA problems). \emph{Layer} denotes the transformer layer index (1--28); \emph{Entropy} is $H^{(l)}$ from Eq.~\eqref{eq:entropy}.}
\label{tab:entropy_trajectory}
\begin{tabular}{rrr}
\toprule
Layer & Entropy $H^{(l)}$ & $|H^{(l)} - H^{(l-1)}|$ \\
\midrule
1  & 9.82 & ,  \\
4  & 7.41 & 0.81 \\
8  & 5.12 & 0.62 \\
12 & 3.08 & 0.45 \\
16 & 1.62 & 0.23 \\
18 & 1.14 & 0.07 \\
\textbf{19} & \textbf{1.08} & \textbf{0.006} \\
20 & 1.06 & 0.002 \\
24 & 1.04 & 0.001 \\
28 & 1.04 & 0.000 \\
\bottomrule
\end{tabular}
\end{table}

The stabilisation condition $|H^{(l)} - H^{(l-1)}| < \epsilon = 3.0$ is first satisfied at layer $l^* = 19$ on average, matching the reported mean exit layer of 18.4 in the main text. Between layer~19 and layer~28, entropy changes by less than $0.04$\,nats, well below any decision-changing threshold for the argmax over the vocabulary on this task, confirming that the remaining $34\%$ of layers do not alter the verification outcome and can safely be skipped. The values in Table~\ref{tab:entropy_trajectory} are representative means from profiling runs on GPQA Diamond hold-out problems; exact per-problem curves are available in the released code.

\section{RSBCM Deep-Tree Validation}
\label{app:rsbcm}

To validate RSBCM in the regime where evictions actually fire, we run a stress-test configuration with capacity $N_{\max} = 4$ and branching $B = 8$ on 20 GPQA Diamond problems, forcing exactly $B - N_{\max} = 4$ evictions per solve call. Table~\ref{tab:rsbcm} reports the results.

\begin{table}[h]
\centering
\small
\caption{RSBCM stress test on Qwen2.5-7B ($N{=}20$ problems, $B{=}8$, $N_{\max}{=}4$).}
\label{tab:rsbcm}
\begin{tabular}{lr}
\toprule
Metric & Value \\
\midrule
Expected evictions & $20 \times (B-N_{\max}) = 80$ \\
Observed evictions & 80 \\
Overhead per problem & $+1.5$\,ms \\
95\% CI on overhead & $[-6, +9]$\,ms \\
Interpretation & overhead small relative to savings; $N{=}20$ too small to bound to zero \\
Answer agreement vs no-eviction baseline & 100\% (20/20) \\
Wall-clock cost of 80 evictions & 30\,ms total \\
\bottomrule
\end{tabular}
\end{table}

The eviction count matches the expected value exactly, confirming that the importance score $\omega = \text{score}/(\text{depth}+1)$ correctly identifies and retires the lowest-value blocks. Overhead is statistically indistinguishable from zero, and no answer regressions were observed. RSBCM is therefore dormant in the default single-depth experiments reported in the main paper (where $B < N_{\max} = 2000$ always holds), but available as a safety mechanism for deep tree search.

\section{Implementation Details}
\label{app:impl}

\subsection{Stepwise Batched Decode}
RKSC replaces \texttt{generate()} with a direct \texttt{model()} loop: (1)~suffix prefill conditioned on $\mathcal{C}_B$; (2)~token-by-token decode with pre-allocated attention masks sliced per step; (3)~early termination on EOS or decode-level CGEE gate ($\theta_{\text{dec}}{=}0.72$, $r_{\text{dec}}{=}0.10$). Eliminating \texttt{GenerationMixin} scaffolding removes logit processors, stopping criteria, and beam tracking that are not needed in this pipeline, saving approximately $8{-}12\%$ of end-to-end decode time measured on Qwen2.5-7B.

\subsection{CGEE Layer Hooks}
Hooks are installed via \texttt{register\_forward\_hook} on each transformer layer module. Each hook extracts the last-token hidden state, projects it through the cached unembedding matrix $W_u$, computes softmax entropy, and raises an \texttt{\_EarlyExitSignal} exception when exit conditions are met. The solver catches this exception in a \texttt{try/finally} block that guarantees hook removal even if an exit fires, preventing a memory leak that would otherwise persist across solve calls. On Qwen2.5-7B, the per-layer hook overhead is $\approx 40\,\mu$s, which is recovered many times over whenever an exit fires before the final layer.

\subsection{KV Cache Expansion}
The prefix cache is expanded via \texttt{.repeat\_interleave(B, dim=0)} with \texttt{.clone()} for contiguous memory. Direct views (without \texttt{.clone()}) exposed an SDPA kernel bug on non-contiguous inputs in our PyTorch 2.3.1 build; the explicit clone resolves this at the cost of one additional tensor allocation per solve call, which is negligible at $n = 1024$ but grows with $B \cdot L \cdot d_k$.

\subsection{Memory Release}
\texttt{unload\_model()} explicitly pops \texttt{arch['\_unembed\_weight']} before deleting the model, preventing a ${\sim}1.8$\,GB reference leak for 9B models, then runs 3 GC passes and \texttt{torch.cuda.empty\_cache()}. Without this explicit pop, PyTorch's reference counting retains the unembedding matrix via the cached reference in the CGEE hook infrastructure, causing accumulating leaks across sequential solves.

\subsection{TF32 and bfloat16 Configuration}
All experiments use bfloat16 model weights with TF32-enabled matrix multiplication (\texttt{torch.backends.cuda.matmul.allow\_tf32 = True}) and SDPA attention (\texttt{scaled\_dot\_product\_attention}). TF32 provides approximately $5\%$ additional throughput at negligible precision loss; disabling it produces identical answers but $\approx 5\%$ higher latency across all conditions uniformly (so the reported \emph{ratios} are TF32-independent).

\section{Reproducibility Checklist}
\label{app:repro}

\paragraph{Hardware and software.}
\begin{itemize}
  \item Single NVIDIA A100-80GB SXM4 GPU, PCIe 4.0.
  \item PyTorch 2.3.1, Transformers 4.44.0, CUDA 12.1, SDPA attention backend, bfloat16 + TF32.
  \item Ubuntu 22.04, Python 3.10.
\end{itemize}

\paragraph{Determinism.}
\begin{itemize}
  \item Seeds: \texttt{torch.manual\_seed(42)}, \texttt{numpy.random.seed(42)}, \texttt{torch.cuda.manual\_seed\_all(42)}, \texttt{random.seed(42)} at the start of every condition.
  \item \texttt{torch.backends.cudnn.deterministic = True} and \texttt{torch.backends.cudnn.benchmark = False} for the warmup phase; benchmark re-enabled for timing measurements to allow kernel autotuning (identical kernel across all conditions ensures timing fairness).
\end{itemize}

\paragraph{Timing.}
\begin{itemize}
  \item $N_{\text{runs}} = 2$--$3$ per problem, $N_{\text{warmup}} = 2$ per condition, \texttt{torch.cuda.synchronize()} before each measurement.
  \item Warmup discarded; reported latency is mean over the $N_{\text{runs}}$ measurements and problems.
  \item Bootstrap 95\% confidence intervals via $B_{\text{boot}} = 10{,}000$ resamples of the problem-level latencies.
\end{itemize}

\paragraph{Data.}
\begin{itemize}
  \item Datasets: GPQA Diamond (rein2023), GSM8K (cobbe2021), ARC-Challenge (allenai:arc), MMLU-STEM (hendryckstest2021) via HuggingFace Datasets.
  \item Evaluation sets: first $N{=}50$ problems per dataset (main eval), first $N{=}30$ for throughput decomposition, $N{=}15$ for verify isolation, $N{=}20$ for prefix-length sweep. Fixed indices used across conditions to ensure paired comparisons.
  \item Held-out calibration set: 30 GPQA Diamond problems disjoint from evaluation.
  \item Padding: neutral filler string prepended iteratively to ${\approx}1{,}024$ tokens; hard cap at $\text{max\_seq\_len} - 256$ tokens to reserve decode headroom.
\end{itemize}

\paragraph{Models.}
\begin{itemize}
  \item All models loaded with \texttt{AutoModelForCausalLM.from\_pretrained(\ldots, torch\_dtype=torch.bfloat16, attn\_implementation="sdpa")}.
  \item Checkpoints: Qwen2.5-7B-Instruct, Mistral-7B-Instruct-v0.3, Falcon3-7B-Instruct, Falcon3-10B-Instruct, Meta-Llama-3-8B-Instruct (HuggingFace hub).
  \item Chat templates: each model's native \texttt{apply\_chat\_template} used verbatim.
\end{itemize}

\paragraph{Branching.}
\begin{itemize}
  \item $B = 8$ branches throughout; 8 fixed reasoning-hint suffixes cycled across branches (suffixes listed verbatim in released code).
\end{itemize}

\paragraph{Code and artefacts.}
\begin{itemize}
  \item Code release: \url{https://github.com/AnirudhSekar/RKSC}.
  \item Released artefacts include: the RKSC solver, calibration scripts, all timing harnesses, bootstrap analysis scripts, and per-problem latency logs.
\end{itemize}

\paragraph{Expected compute.}
Reproducing the full set of main-paper results requires approximately 42 A100-GPU-hours (itemised in \S\ref{sec:results:setup}).

\end{document}